\documentclass{article} 
\PassOptionsToPackage{table,dvipsnames}{xcolor}
\usepackage{iclr2027_conference,times}

\usepackage{amsmath,amsfonts,bm}

\def\eqref#1{equation~\ref{#1}}

\def\1{\bm{1}}

\DeclareMathAlphabet{\mathsfit}{\encodingdefault}{\sfdefault}{m}{sl}
\SetMathAlphabet{\mathsfit}{bold}{\encodingdefault}{\sfdefault}{bx}{n}

\usepackage{hyperref}
\usepackage{url}
\usepackage{booktabs}
\usepackage{amsmath}
\usepackage{amsfonts}
\usepackage{amssymb}
\usepackage{amsthm}
\usepackage{bm}
\usepackage{nicefrac}
\usepackage{microtype}
\usepackage[table,dvipsnames]{xcolor}
\usepackage{caption}
\usepackage{multirow}
\usepackage{array}
\usepackage{xspace}
\usepackage[noend]{algorithmic}
\usepackage{algorithm}
\usepackage{graphicx}
\usepackage{tikz}
\usepackage{placeins}
\usepackage{float}
\usetikzlibrary{arrows.meta,positioning,calc,shapes.symbols}
\usepackage{cleveref}
\crefname{appendixsection}{appendix}{appendices}
\Crefname{appendixsection}{Appendix}{Appendices}
\crefname{appendixfigure}{appendix figure}{appendix figures}
\Crefname{appendixfigure}{Appendix figure}{Appendix figures}
\crefname{appendixtable}{appendix table}{appendix tables}
\Crefname{appendixtable}{Appendix table}{Appendix tables}
\crefname{appendixequation}{appendix equation}{appendix equations}
\Crefname{appendixequation}{Appendix equation}{Appendix equations}
\newcommand{\UseAppendixReferenceNames}{%
  \crefalias{section}{appendixsection}%
  \crefalias{subsection}{appendixsection}%
  \crefalias{subsubsection}{appendixsection}%
  \crefalias{figure}{appendixfigure}%
  \crefalias{table}{appendixtable}%
  \crefalias{equation}{appendixequation}%
}

\graphicspath{{figures/}}

\hypersetup{
  colorlinks=true,
  linkcolor=MidnightBlue,
  citecolor=MidnightBlue,
  urlcolor=RoyalBlue
}
\newtheorem{lemma}{Lemma}
\newtheorem{proposition}{Proposition}

\usepackage{caption}
\usepackage{subcaption}
\usepackage{afterpage}
\usepackage[dvipsnames]{xcolor}
\usepackage{hyperref}

\usepackage[most]{tcolorbox}
\usepackage{xcolor}

\newtcolorbox{promptbox}[1]{
  enhanced,
  colback=white,
  colframe=black,
  boxrule=0.8pt,
  arc=6pt,
  outer arc=6pt,
  title=#1,
  colbacktitle=black,
  coltitle=white,
  fonttitle=\large\bfseries,
  left=10pt,
  right=10pt,
  top=10pt,
  bottom=8pt,
  toptitle=2pt,
  bottomtitle=2pt,
  lefttitle=10pt,
  righttitle=10pt,
}

\newcommand{\TreeHCA}{\textsc{TreeHCA}\xspace}
\newcommand{\HCA}{\mathrm{HCA}}
\newcommand{\ATPO}{AT$^{2}$PO\xspace}

\newcommand{\EE}{\mathbb{E}}
\newcommand{\PP}{\mathbb{P}}
\renewcommand{\1}{\mathbf{1}}
\newcommand{\TT}{\mathcal{T}}

\newcommand{\Children}{\mathrm{Ch}}
\newcommand{\pa}{\mathrm{pa}}

\definecolor{rankfirst}{HTML}{A0D8B5}
\definecolor{ranksecond}{HTML}{C9E8D5}
\definecolor{rankthird}{HTML}{E9F5ED}
\definecolor{tablegroup}{HTML}{F0F5F2}
\newcommand{\rkone}[1]{\cellcolor{rankfirst}\textcolor{black}{#1}}
\newcommand{\rktwo}[1]{\cellcolor{ranksecond}\textcolor{black}{#1}}
\newcommand{\rkthree}[1]{\cellcolor{rankthird}\textcolor{black}{#1}}

\title{\TreeHCA: Hindsight-Guided Exploration and Credit Assignment for Multi-Turn LLM Agents}

\author{%
  \small Debajoy Mukherjee\thanks{Correspondence to: Debajoy Mukherjee \texttt{<debajoym98@tamu.edu>}.} \quad
  Jeremy Carleton \quad
  Sushil Vemuri \quad
  Srinivas Shakkottai \quad
  Dileep Kalathil \\
  \small\normalfont Department of Electrical and Computer Engineering, Texas A\&M University, College Station, Texas, USA
}

\iclrfinalcopy
\begin{document}

\frenchspacing

\maketitle

\begin{abstract}
Outcome-supervised group-relative policy optimization (GRPO) gives every token in an LLM agent's trajectory the same advantage, leaving useful and unhelpful decisions undifferentiated. Hindsight credit assignment (HCA) can refine this feedback, but its estimates depend on trajectories containing informative alternatives at consequential histories. We introduce \TreeHCA, which couples \emph{sampling via hindsight ratios} to generate these alternatives with \emph{hindsight credit assignment} to evaluate their contribution. We use changes in reference-answer likelihood to revisit intermediate histories and select alternative continuations, forming a tree of reasoning and tool interactions. This exploration changes the distribution of sampled continuations. We derive the corresponding inverse-hindsight importance correction, including an exact finite-pool identity, and use its structure in a normalized tree backup that propagates terminal outcomes into turn-level advantages. This common hindsight structure is \TreeHCA's central design principle, linking informative exploration to hindsight-guided credit assignment. Across seven question-answering benchmarks and three Qwen backbones, \TreeHCA outperforms every baseline in aggregate exact match, validating our paired design for multi-turn LLM training. Our code is available at
\href{https://github.com/TheBlackCat22/TreeHCA}{github.com/TheBlackCat22/TreeHCA}.
\end{abstract}

\begin{figure}[!ht]
\centering
\includegraphics[width=\linewidth]{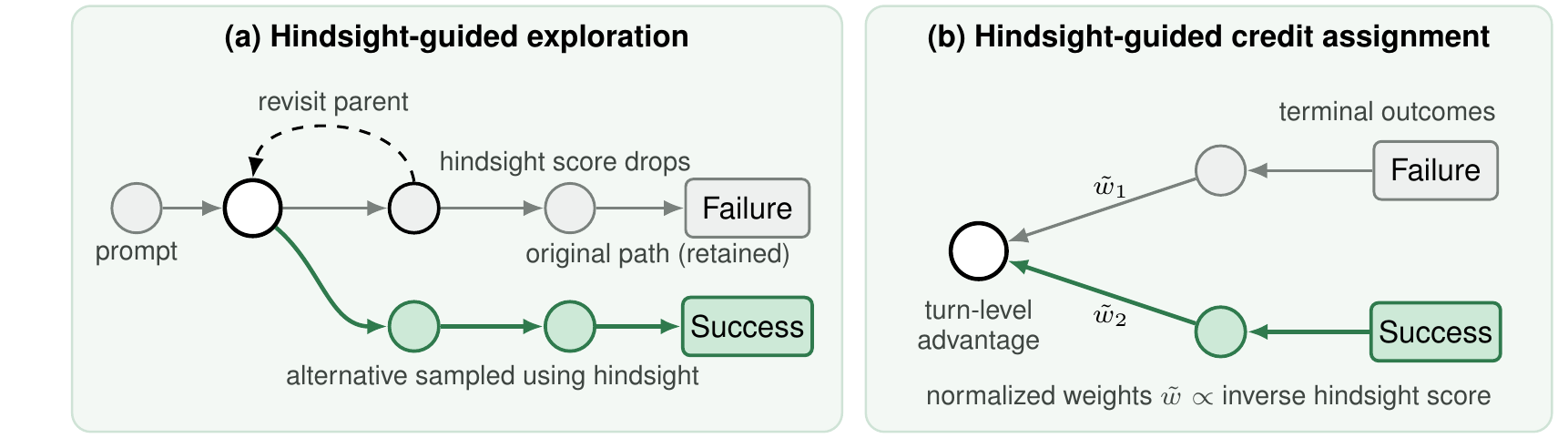}
\caption{\textbf{\TreeHCA pairs exploration with credit assignment.}
\textbf{(a)} A drop in hindsight score identifies a parent history to revisit.
The sampled alternative continues alongside the retained original path, here
illustrated with successful and unsuccessful outcomes.
\textbf{(b)} Outcome signals propagate backward using normalized inverse-score
weights to construct turn-level advantages.}
\label{fig:treehca_main}
\end{figure}

\section{Introduction}
\label{sec:introduction}

An LLM agent solves a multi-turn task through reasoning steps and tool calls. With terminal verification, success feedback arrives only after a rollout is completed. Outcome-supervised group-relative policy optimization (GRPO) converts this feedback into a trajectory advantage and \emph{assigns it to every generated token} \citep{GRPO}. A successful trajectory can reinforce unhelpful turns, while a failed one can penalize useful decisions. Therefore, a central challenge in providing feedback for training multi-turn LLM agents is to score intermediate decisions by their contribution to success.

\emph{How should we assign credit to intermediate decisions?} Hindsight credit assignment (HCA) relates an action's probability to the subsequent outcome \citep{harutyunyan2019hindsight}. An action that appears disproportionately often in successful continuations receives a larger success-conditioned hindsight ratio. This ratio has an exact change-of-measure interpretation, which connects outcome-conditioned observations to the action value under the original policy. In a language agent, the space of interaction histories is large, so the usefulness of this construction depends on obtaining informative alternatives from comparable histories. GiGPO forms groups of comparable states to obtain step-level relative advantages, and HCAPO uses outcome-conditioned LLM scores to refine trajectory feedback \citep{feng2025groupingrouppolicyoptimizationllm,tan2026hindsightcreditassignmentlonghorizon}. These methods make intermediate credit available from policy-generated chains. However, their estimates critically depend on these chains containing informative alternatives. In practice, such alternatives may be sparse at consequential histories, leaving the sampled trajectories insufficiently informative for turn-level credit assignment.

It is therefore crucially important to gather this information by revisiting histories from which alternative decisions can change the outcome. When an observed turn reduces the likelihood of success, we need to return to its parent and sample new continuations while retaining the original path. This produces comparisons under a shared prefix. Entropy-guided exploration allocates sampling effort to uncertain decisions \citep{AEPO,zong2026at2poagenticturnbasedpolicy}. However, uncertainty alone does not reveal a decision's contribution to success. We instead use hindsight to guide exploration toward successful outcomes. Hindsight scores identify turns worth revisiting and promising alternatives to sample from their parent histories. We therefore introduce \emph{hindsight-guided sampling} as an exploration strategy for obtaining the information needed to evaluate intermediate decisions.

We then make a complementary connection that is key to this paper: how we gather evidence through exploration determines how we must weight it when estimating turn-level advantages. By selecting continuations through hindsight, we change their distribution in the rollout tree. We must therefore account for that selection when estimating advantages for the original rollout policy. For a hindsight-tilted proposal, we derive the inverse hindsight ratio as the required importance weight. Our central insight is this correspondence: \emph{the hindsight ratio guides exploration toward informative continuations, and its inverse accounts for that sampling preference when estimating credit.}

We develop this correspondence in \textbf{\TreeHCA}. Its core idea has two parts. First, we use \emph{hindsight-guided sampling} to generate informative trajectories. Changes in the model's likelihood of the reference answer identify turns to revisit and guide our selection of alternative continuations. Second, we use \emph{credit assignment to samples} to accord advantages to intermediate turns according to the outcomes of their continuations. We propagate group-normalized terminal rewards through the tree using inverse hindsight weights to counterbalance the selection preference. The resulting turn-level signals reflect the successful and unsuccessful continuations supported by each intermediate decision. The key to \TreeHCA is that these operations follow the same hindsight change of measure: the score that directs exploration also determines the inverse correction used to compute credit. This pairing is the central design principle behind the gains evaluated in \Cref{sec:experiments}. See \Cref{fig:treehca_main}.

We make three contributions. \textbf{(1) Hindsight-guided exploration.} We formulate an exploration rule that uses turn-level hindsight scores to select histories and alternative continuations, connecting the exact hindsight ratio to candidate action values. \textbf{(2) Hindsight-guided credit assignment.} We derive the importance correction for hindsight-guided selection, including an exact finite-pool identity, and use its inverse-score structure to construct turn-level advantages through a normalized tree backup. \textbf{(3) Experimental validation of the paired construction.} We evaluate \TreeHCA on seven question-answering benchmarks with Qwen2.5-3B-Instruct, Qwen3-4B, and Qwen3-8B. Across the reported model and task settings, \TreeHCA outperforms all compared methods in aggregate exact match (EM). The paired design also achieves the strongest EM and F1 in the component study.

\section{Related Work}
\label{sec:related_work}

\paragraph{Intermediate credit from terminal outcomes.}
HCA expresses action values through outcome-conditioned action probabilities \citep{harutyunyan2019hindsight}. For LLM agents, GiGPO constructs groups around recurring or comparable environment states and combines step-level and trajectory-level advantages \citep{feng2025groupingrouppolicyoptimizationllm}. HCAPO uses the LLM as a hindsight critic, normalizes its outcome-conditioned action scores within a trajectory, and combines them with group-relative feedback \citep{tan2026hindsightcreditassignmentlonghorizon}. IGPO uses changes in reference-answer likelihood as information-gain rewards \citep{wang2026informationgainbasedpolicyoptimization}. These mechanisms extract intermediate learning signals from linear rollouts. \TreeHCA uses outcome relevance to select the continuations supplying these signals.

\paragraph{Tree sampling and turn-level learning.}Tree-GRPO shares prefixes in LLM-agent rollouts and constructs relative advantages within and across trees \citep{ji2025treesearch}. In diffusion models, TreeGRPO backs terminal advantages through a denoising tree \citep{ding2026treegrpo}. AEPO allocates sampling using entropy-based information proxies and shapes the optimization signal, while \ATPO combines entropy-guided exploration with an entropy-weighted tree backup \citep{AEPO,zong2026at2poagenticturnbasedpolicy}. IGRPO uses reference-answer likelihood to guide adaptive rollout expansion \citep{IGRPO}. These approaches motivate examining exploration and credit together. Our focus is the sampling distribution induced by hindsight selection and the corresponding inverse weighting in the backup. The separate policy-update ratio compares the updated and behavior policies. \Cref{app:extended_related_work} provides a detailed review of these sampling and credit mechanisms.

\section{Problem Formulation}
\label{sec:setup}

We consider an agent that starts from a question $x$ and alternates reasoning, tool use, and observation before producing an answer. At turn $t$, its history $h_{<t}$ contains the prompt and preceding interactions. The policy generates a reasoning sequence $s_t$ and an action $a_t$, which we treat jointly as a decision $u_t=(s_t,a_t)$. Executing a tool action produces an observation $o_t$. A completed turn is $n_t=(u_t,o_t)$, and a trajectory is $\tau=(x,n_0,\ldots,n_T)$. A final-answer action terminates the trajectory. The verifier supplies a binary terminal reward $R(\tau)\in\{0,1\}$, and the objective is
\begin{equation}
J(\theta)=\EE_{\tau\sim P_{\pi_\theta}}[R(\tau)],
\label{eq:rl_objective}
\end{equation}
where $P_{\pi_\theta}$ includes both policy decisions and environment responses.

During one rollout-collection round, we hold the behavior policy $\pi=\pi_{\theta_{\rm old}}$ fixed. All success probabilities and hindsight distributions below refer to continuations under this policy. Write $V^\pi(h)=\PP_\pi(R=1\mid h)$ and $Q^\pi(h,u)=\PP_\pi(R=1\mid h,U=u)$. The latter averages over the observation following $u$ and subsequent policy decisions. For a completed child $c=(u,o)$ of $n$,
\begin{equation}
p_\pi(c\mid n)=\pi(u\mid h(n))P_{\rm env}(o\mid h(n),u).
\label{eq:base_child_kernel}
\end{equation}
The history $h(n)$ includes the full prefix ending at $n$. Multiple trajectories with shared prefixes form a tree $\TT$. Its sibling nodes supply alternative completed turns from a common parent, and its leaves carry the verifier outcomes used for learning. We use the reference answer $y^\star$ during training to score these alternatives, and the learned agent uses its ordinary policy and search tool during evaluation.

\section{Hindsight-Guided Exploration}
\label{sec:hindsight_guided_branching}

We first develop \TreeHCA's hindsight exploration policy. This is the sampling part of the paired construction: we use scores indicative of final outcome likelihood to decide where to gather informative alternatives. A turn that reduces the likelihood of success identifies a parent history worth revisiting. We sample from the history before that decision, so an alternative can change both the next action and the evidence subsequently available to the agent. Retaining the original continuation provides a comparison under the same prefix. We formalize this criterion through the exact hindsight distribution, then construct its practical score and tree-sampling rule. The terminal verifier evaluates each retained path, supplying the outcomes whose credit we compute in \Cref{sec:credit_assignment}.

\subsection{Sampling via Credit Assignment}
\label{sec:branch_credit}

For a supported decision $u$ at history $h$ and an outcome $r$ of positive probability, the hindsight ratio is
\begin{equation}
\rho_{\HCA}(h,u;r)
=\frac{\PP_\pi(U=u\mid h,R=r)}{\pi(u\mid h)}
=\frac{\PP_\pi(R=r\mid h,U=u)}{\PP_\pi(R=r\mid h)}.
\label{eq:future_outcome_hca_ratio}
\end{equation}
Bayes' rule gives the second equality. A ratio above one means that $u$ is more prevalent among continuations with outcome $r$ than under the rollout policy. For success,
\begin{equation}
\rho_{\HCA}(h,u;1)=\frac{Q^\pi(h,u)}{V^\pi(h)},
\qquad
\log\rho_{\HCA}(h,u;1)=\log Q^\pi(h,u)-\log V^\pi(h).
\label{eq:success_hindsight_branching_ratio}
\end{equation}
Thus a negative log ratio identifies a decision after which success is less likely than it was at the parent history. Exploring from that parent supplies alternatives to the observed decision.

The same quantity ranks the alternatives. The following identity instantiates the return-conditioned HCA result of \citet{harutyunyan2019hindsight} for our turn-level process.
\begin{lemma}[Hindsight ranking of candidate decisions]
\label{lem:hca_counterfactual_actions}
For $\pi(u\mid h)>0$, let $T(h,\pi)$ be the distribution of complete continuations from $h$ under $\pi$ and the environment. Then
\begin{equation}
Q^\pi(h,u)=\EE_{\tau\sim T(h,\pi)}
\left[R(\tau)\rho_{\HCA}(h,u;R(\tau))\right].
\label{eq:hca_counterfactual_action_value}
\end{equation}
For binary terminal rewards and $V^\pi(h)>0$, this reduces to $Q^\pi(h,u)=V^\pi(h)\rho_{\HCA}(h,u;1)$.
\end{lemma}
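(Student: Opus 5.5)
The plan is to expand the expectation in \eqref{eq:hca_counterfactual_action_value} over the terminal outcome. Then we substitute the Bayes form of the hindsight ratio from \eqref{eq:future_outcome_hca_ratio}. We are not conditioning on $U=u$, so $R(\tau)$ has the marginal law $\PP_\pi(R=r\mid h)$ under $T(h,\pi)$. The integrand $R(\tau)\rho_{\HCA}(h,u;R(\tau))$ depends on $\tau$ only through $R(\tau)$. We may therefore write
\begin{equation*}
\EE_{\tau\sim T(h,\pi)}\bigl[R(\tau)\rho_{\HCA}(h,u;R(\tau))\bigr]
=\sum_{r:\,\PP_\pi(R=r\mid h)>0} r\,\PP_\pi(R=r\mid h)\,\rho_{\HCA}(h,u;r).
\end{equation*}
The sum runs only over outcomes of positive probability, which are exactly those at which the ratio is defined. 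Outcomes of zero probability contribute nothing to the expectation.

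Next we insert the second expression in \eqref{eq:future_outcome_hca_ratio}, namely $\rho_{\HCA}(h,u;r)=\PP_\pi(R=r\mid h,U=u)/\PP_\pi(R=r\mid h)$. This step uses $\pi(u\mid h)>0$, so that conditioning on $U=u$ is well defined. The marginal probabilities cancel term by term, leaving $\sum_r r\,\PP_\pi(R=r\mid h,U=u)=\EE_\pi[R\mid h,U=u]$. We then need a support check: any $r$ with $\PP_\pi(R=r\mid h)=0$ also has $\PP_\pi(R=r\mid h,U=u)=0$. This holds because $\PP_\pi(R=r\mid h)\ge \pi(u\mid h)\,\PP_\pi(R=r\mid h,U=u)$. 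Dropping those outcomes therefore loses nothing. For a general reward $\EE_\pi[R\mid h,U=u]$ is the action value. For binary $R$ it equals $\PP_\pi(R=1\mid h,U=u)=Q^\pi(h,u)$, since the expectation also averages over the subsequent observation and later decisions, which is the paper's definition.

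For the binary specialization, only $r=1$ contributes because the $r=0$ term carries the factor $r=0$. The sum collapses to $V^\pi(h)\rho_{\HCA}(h,u;1)$, which needs $V^\pi(h)>0$ for the ratio at $r=1$ to be defined. The same conclusion follows directly from \eqref{eq:success_hindsight_branching_ratio}.

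The main obstacle is not algebraic but measure-theoretic bookkeeping. We must be careful about which distribution $R(\tau)$ is drawn from and about the support conditions. If the continuation space is uncountable, the reduction "depends on $\tau$ only through $R$" is the tower property. Everything else is a one-line cancellation.
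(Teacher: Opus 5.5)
Your proposal is correct and follows essentially the paper's argument: apply Bayes' rule term by term over outcomes of positive probability, cancel the marginal $\PP_\pi(R=r\mid h)$, sum $r\,\PP_\pi(R=r\mid h,U=u)$ to obtain the action value, and keep only $r=1$ in the binary case. Your explicit support check, that an outcome null under $h$ is also null given $U=u$, spells out a point the paper covers only by remarking that zero-probability return events contribute zero.
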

The parent value is common to all candidates, so the exact hindsight ratio ranks them by expected terminal reward. A low ratio identifies an observed turn to revisit, while a high ratio identifies an attractive alternative from its parent (\cref{app:hca_ranking_proof}).

\subsection{Scoring Histories with the Policy LLM}
\label{sec:practical_score}

We obtain a training-time proxy using the reference answer $y^\star=(y_1^\star,\ldots,y_J^\star)$. Following the use of answer likelihood as an information signal \citep{wang2026informationgainbasedpolicyoptimization,IGRPO}, we append a fixed answer-eliciting instruction $s^\star$ to a history and teacher-force its answer tokens. The score is the mean token log probability
\begin{equation}
\ell_\pi(h;y^\star)=\frac{1}{J}\sum_{j=1}^{J}
\log\pi(y_j^\star\mid h,s^\star,y_{<j}^\star).
\label{eq:teacher_forced_score}
\end{equation}
For a completed turn $c=(u,o)$, let $h^-(c)$ and $h^+(c)$ be the histories before and after that turn.
\begin{equation}
\delta(c)=\log\widehat\rho_c
=\ell_\pi(h^+(c);y^\star)-\ell_\pi(h^-(c);y^\star).
\label{eq:practical_log_hindsight_ratio}
\end{equation}
This score measures how the realized turn changes compatibility with the reference answer. It includes the information supplied by the tool observation. A negative score identifies a reduction in that compatibility and motivates collecting alternatives from $h^-(c)$. The teacher-forced likelihood provides a computable surrogate for the conditional success probabilities in \Cref{eq:success_hindsight_branching_ratio}.

\begin{figure}[b]
\centering
\includegraphics[width=\linewidth]{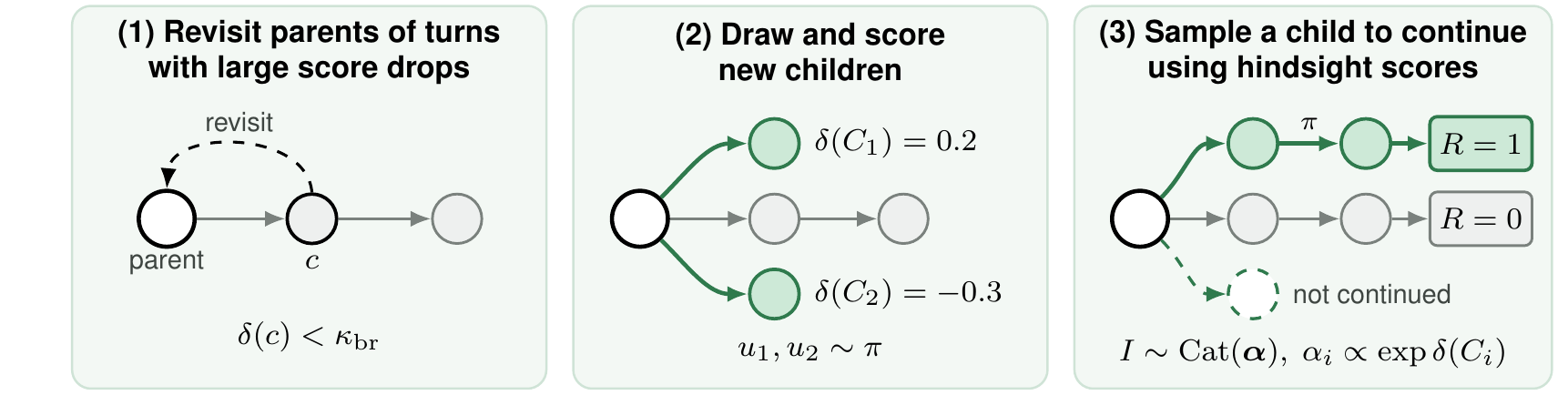}
\caption{Hindsight-guided exploration. \textbf{(1)} A low score on an observed completed turn identifies its parent for resampling. \textbf{(2)} From that starting point, $N=2$ policy decisions are executed and scored as completed candidates. \textbf{(3)} A categorical draw selects one using normalized hindsight scores. It continues under $\pi$ to a verifier reward, while the original child and subtree remain. Scores and outcomes are illustrative.}
\label{fig:hindsight_branching}
\end{figure}

\subsection{Selecting and Completing Alternatives}
\label{sec:hindsight_filtering}

Starting from $M$ complete policy rollouts, we identify turns with $\delta(c)<\kappa_{\rm br}$ for a fixed negative threshold. In each expansion round, we select up to $K$ lowest-scoring turns. If fewer than $K$ satisfy the threshold, uniform tie-breaking fills the allocation from the available turns. Each selected turn identifies its parent as a starting point for exploration, as illustrated in \Cref{fig:hindsight_branching}.

At a selected parent, we draw $N$ candidate decisions from $\pi$, execute their actions to obtain completed candidates $C_1,\ldots,C_N$, and score them. We sample one candidate index with
\begin{equation}
\alpha_i=\PP(I=i\mid C_{1:N})
=\frac{\exp\delta(C_i)}{\sum_{j=1}^N\exp\delta(C_j)}
=\frac{\widehat\rho_{C_i}}{\sum_{j=1}^N\widehat\rho_{C_j}}.
\label{eq:hindsight_candidate_sampling}
\end{equation}
Only $C_I$ is continued to termination, and the original child and its subtree remain available for learning. This spends $N$ turn executions and one full continuation per expansion. Stochastic selection allocates the continuation budget across scored alternatives.

\Cref{alg:tree_construction} gives the complete construction. The tree now contains both original rollouts and selectively retained alternatives. Their outcomes provide comparisons under a shared prefix, while their selection probabilities determine how these comparisons should contribute to a rollout-policy expectation. This is the estimation task addressed next.

\begin{algorithm}[t]
\caption{Hindsight-guided tree construction}
\label{alg:tree_construction}
\begin{algorithmic}[1]
\REQUIRE Prompt $x$, reference answer $y^\star$, frozen policy $\pi$, initial count $M$, rounds $L$, exploration budget $K$, candidates $N$, threshold $\kappa_{\rm br}<0$.
\STATE Initialize $\TT$ with $M$ complete policy rollouts from $x$.
\FOR{$\ell=1,\ldots,L$}
\STATE Score each non-root node by $b(c)=\min\{\delta(c),\kappa_{\rm br}\}$. \label{line:score}
\STATE Select up to $K$ nodes with the smallest scores, breaking ties uniformly.
\FOR{each selected node $c$}
\STATE Draw $N$ decisions from $\pi(\cdot\mid h^-(c))$ and execute them.
\STATE Score the completed candidates and sample $I$ using \Cref{eq:hindsight_candidate_sampling}.
\STATE Attach $C_I$ to $\pa(c)$ and complete it under $\pi$, retaining $c$ and its subtree.
\ENDFOR
\ENDFOR
\STATE \textbf{return} $\TT$ with its terminal verifier rewards.
\end{algorithmic}
\end{algorithm}

\section{Hindsight-Guided Credit Assignment}
\label{sec:credit_assignment}

We now develop the complementary part of the pair which includes assigning credit to the samples gathered by \TreeHCA's exploration policy. Hindsight-guided selection favors high-scoring continuations, while the learning signal must reflect their value under the original rollout policy. The same scores that guide exploration therefore determine the inverse weighting needed for estimation. We derive this correspondence for the specified sampling laws and use its inverse-score structure to construct turn-level advantages on the tree.

\subsection{The Sampling and Estimation Correspondence}
\label{sec:hindsight_corrected_tree_backup}

Fix a parent $n$ and write $p(c)=p_\pi(c\mid n)$ for its completed-child kernel. For a positive score $s(c)$ with $Z=\EE_p[s(C)]\in(0,\infty)$, consider the tilted proposal
\begin{equation}
q_s(c)=\frac{p(c)s(c)}{Z},
\qquad
\frac{p(c)}{q_s(c)}=\frac{Z}{s(c)}.
\label{eq:child_hindsight_ratio}
\end{equation}
The score specifies which children are favored in sampling, and its inverse supplies the change of measure back to the rollout policy. If $s(c)$ is the exact success-conditioned hindsight ratio of the completed child, this proposal is the success-conditioned child distribution on its support.

\begin{proposition}[Credit-guided sampling and inverse correction]
\label{prop:hindsight_correction}
Let $f$ be integrable under $p$. If $q_s$ covers the support of $pf$, then
\begin{equation}
\EE_{q_s}\!\left[\frac{Z}{s(C)}f(C)\right]=\EE_p[f(C)].
\label{eq:policy_proposal_importance_identity}
\end{equation}
For a finite pool $C_1,\ldots,C_N\stackrel{\rm iid}{\sim}p$ and selection probabilities $\alpha_i=s(C_i)/\sum_j s(C_j)$, the corresponding identity is
\begin{equation}
\EE\!\left[\frac{f(C_I)}{N\alpha_I}\,\middle|\,C_{1:N}\right]
=\frac{1}{N}\sum_{i=1}^N f(C_i).
\label{eq:finite_pool_correction}
\end{equation}
Consequently, averaging over the pool recovers $\EE_p[f(C)]$.
\end{proposition}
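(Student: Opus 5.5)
The proof splits into the population identity (\Cref{eq:policy_proposal_importance_identity}) and the finite-pool identity (\Cref{eq:finite_pool_correction}); both are elementary changes of measure.

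For the population statement, I will write the expectation under $q_s$ as a sum (or integral against a common dominating measure, since completed children $c=(u,o)$ may include continuous observations) and substitute the density ratio from \Cref{eq:child_hindsight_ratio}:
\[
\EE_{q_s}\!\left[\tfrac{Z}{s(C)}f(C)\right]=\sum_{c:\,q_s(c)>0} \tfrac{p(c)s(c)}{Z}\cdot\tfrac{Z}{s(c)}f(c)=\sum_{c:\,q_s(c)>0} p(c)f(c).
\]
The only care needed is the support. Because $s>0$, we have $q_s(c)>0$ exactly when $p(c)>0$. The covering hypothesis guarantees that no mass of $pf$ is lost when we restrict to $\{q_s>0\}$, so the right side equals $\EE_p[f(C)]$. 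Integrability of $f$ under $p$ makes every sum absolutely convergent, so rearranging is legitimate. Finiteness and positivity of $Z$ make $q_s$ a probability measure and the weight $Z/s$ well defined.

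For the finite pool, I will condition on $C_{1:N}$. Given the pool, $I$ is categorical with weights $\alpha_i$, and every $\alpha_i>0$ since $s>0$. Hence
\[
\EE\!\left[\tfrac{f(C_I)}{N\alpha_I}\,\middle|\,C_{1:N}\right]=\sum_{i=1}^N\alpha_i\tfrac{f(C_i)}{N\alpha_i}=\tfrac1N\sum_{i=1}^N f(C_i).
\]
One subtlety is ties: if several $C_i$ coincide as objects, I index by $i$ rather than by value, so the argument is unaffected.

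For the concluding claim, I will apply the tower property. Since the $C_i$ are i.i.d.\ from $p$ and $f$ is integrable, the unconditional expectation is $\tfrac1N\sum_i\EE_p[f(C_i)]=\EE_p[f(C)]$. Integrability of $f(C_I)/(N\alpha_I)$ follows from the conditional identity applied to $|f|$, which licenses the tower step.

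I expect no real obstacle here. The only points needing attention are the measure-theoretic phrasing of the support condition and checking that $Z$ need not be known in the finite-pool version: the self-normalized $\alpha_i$ replaces $Z$, which is exactly why \Cref{eq:finite_pool_correction} is exact rather than asymptotic.
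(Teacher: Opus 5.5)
Your proposal is correct and follows essentially the same route as the paper: substitute $q_s(c)=p(c)s(c)/Z$ for the tilted identity, then condition on the pool, sum $\alpha_i f(C_i)/(N\alpha_i)$ over the categorical index, and average over the iid candidates. Your remarks that coverage is automatic because $s>0$, and that applying the conditional identity to $|f|$ justifies the tower step, are sound details the paper leaves implicit.
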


The finite pool also determines the selected child's marginal law. For $N$ candidates and a fixed first candidate $c$, symmetry gives
\begin{equation}
q_N(c)=N p(c)\,\EE\!\left[
\frac{s(c)}{s(c)+\sum_{j=2}^N s(C_j)}\right],
\label{eq:finite_pool_marginal_main}
\end{equation}
where the expectation is over the remaining independent policy candidates. At $N=1$, selection follows $p$. For scores bounded above and away from zero, concentration of the remaining scores around their mean gives $q_N(c)\to p(c)s(c)/Z$. Thus the direct tilt describes large-pool selection, while \Cref{eq:finite_pool_correction} provides an exact correction at every finite pool size.

\Cref{app:importance_proof} proves these identities by a change of measure and a conditional weighted sum, and gives a two-child illustration. For the finite-pool selection in \Cref{eq:hindsight_candidate_sampling}, the weight is the mean candidate score divided by the selected score. The identity holds for any strictly positive score. It also holds for a conditionally unbiased estimate of $f(C_I)$ after averaging over continuation randomness.

The direction of this weighting follows the target distribution. In \Cref{lem:hca_counterfactual_actions}, the hindsight ratio multiplies a return sampled under the rollout policy to recover an action-conditioned value. In \Cref{prop:hindsight_correction}, the sampling law itself favors children with large scores, so recovering a rollout-policy expectation uses the inverse score. For the exact hindsight score, these identities express the two directions of the conditional-distribution change of measure.

These identities explain why exploration and credit must be designed together. Hindsight-guided selection makes high-scoring continuations more common, so their observed frequency alone cannot represent the rollout policy. Inverse weighting compensates for this preference when computing a policy-based expectation. The direct-tilt identity gives the correction for score-proportional sampling, and the finite-pool identity gives it conditional on the sampled candidates. \TreeHCA carries the same inverse-score structure into a normalized backup over retained children, propagating their outcome signals to intermediate turns. \Cref{app:finite_pool,app:snis} provide the derivations.

\subsection{From Leaf Rewards to Turn-Level Advantages}
\label{sec:tree_backup}

For the leaves belonging to one prompt, let $\mu_R$ and $\sigma_R$ be the mean and standard deviation of the verifier rewards. Initialize a leaf $\ell$ with its group-relative signal
\begin{equation}
\widehat A^{\rm tree}(\ell)=
\begin{cases}
\bigl(R(\ell)-\mu_R\bigr)/\sigma_R,&\sigma_R>0,\\
0,&\sigma_R=0.
\end{cases}
\label{eq:leaf_advantage}
\end{equation}
The zero-variance case supplies a zero learning signal when every trajectory has the same outcome. The tree then propagates these signals toward the root. For the observed children $\Children(n)$, define
\begin{equation}
\widetilde w_{n,c}=
\frac{\widehat\rho_c^{-1}}{\sum_{c'\in\Children(n)}\widehat\rho_{c'}^{-1}},
\qquad
\widehat A^{\rm tree}(n)=
\sum_{c\in\Children(n)}\widetilde w_{n,c}\widehat A^{\rm tree}(c).
\label{eq:hindsight_corrected_tree_backup}
\end{equation}
The recursion proceeds in reverse topological order. A node with one child inherits that child's signal. At a parent with multiple children, the inverse scores counterbalance the preference toward high-scoring alternatives, and normalization produces a convex combination of the child signals.

Under independent sampling from the specified tilted proposal, self-normalized inverse-score weighting consistently estimates the base-policy expectation. The tree operator applies this normalization to a finite, adaptively constructed tree with proxy scores. Its finite-tree behavior is evaluated through the credit-rule ablation in \Cref{sec:component_results}. The exact identities in \Cref{prop:hindsight_correction} explain the correction's form, while the normalization preserves the scale and common shifts of the group-relative signal. The appendix states the estimator conditions and separates finite-pool selection from the retained siblings.

\subsection{Turn-Level Policy Optimization}
\label{sec:gspo_loss}

Each policy-generated token receives the advantage of its completed turn. If token $t$ of trajectory $y_i$ belongs to node $n_{i,\kappa_i(t)}$, set $A_{i,t}=\widehat A^{\rm tree}(n_{i,\kappa_i(t)})$. Tool outputs and prompt tokens remain in the conditioning history and are masked from the loss. Let $M_{i,t}$ be the policy-token mask and $N_i=\sum_t M_{i,t}$. We use the GSPO-token surrogate \citep{GSPO}, whose detached sequence ratio scales the token gradient:
\begin{align}
r_i(\theta)&=\exp\!\left[\frac{1}{N_i}\sum_t M_{i,t}
\log\frac{\pi_\theta(y_{i,t}\mid h_{i,t})}{\pi(y_{i,t}\mid h_{i,t})}\right],\label{eq:sequence_ratio}\\
s_{i,t}(\theta)&=\frac{\pi_\theta(y_{i,t}\mid h_{i,t})}
{\operatorname{sg}[\pi_\theta(y_{i,t}\mid h_{i,t})]}
\operatorname{sg}[r_i(\theta)].\label{eq:importance-ratio}
\end{align}
Here $\operatorname{sg}$ denotes stop-gradient. This policy-update ratio compares the updated and behavior policies. The inverse-hindsight weights in \Cref{eq:hindsight_corrected_tree_backup} instead counterbalance continuation selection.

Let $\mathcal B_\pi(\cdot\mid x)$ denote the tree-generation law of \Cref{alg:tree_construction}, and let $y_1,\ldots,y_G$ be its root-to-leaf trajectories. We optimize
\begin{equation}
\begin{split}
\mathcal J_{\TreeHCA}(\theta)
=\EE_{x\sim\mathcal D,\TT\sim\mathcal B_\pi(\cdot\mid x)}
\left[\frac{1}{G}\sum_{i=1}^G\frac{1}{N_i}\sum_t M_{i,t}\right.\\
\left.\quad\min\!\left\{s_{i,t}(\theta)A_{i,t},
\operatorname{clip}\!\left(s_{i,t}(\theta),1-\varepsilon_l,1+\varepsilon_r\right)A_{i,t}\right\}\right].
\end{split}
\label{eq:treehca_obj}
\end{equation}
Advantages are held fixed during the update. A shared-prefix turn receives the same node advantage in every descendant trajectory, with the sequence ratio evaluated for each complete trajectory.

\section{Experiments}
\label{sec:experiments}

We now conduct a set of experiments to answer two fundamental questions. \emph{Does hindsight-guided exploration collect informative trajectories?} Chain-method comparisons place intermediate credit signals in context, and tree-method comparisons examine the choice of exploration rule. \emph{Does inverse-hindsight weighting provide useful turn-level credit?} The component comparison changes exploration and credit rules in turn, thereby examining their contribution to the complete method.

\subsection{Setup and Comparison Protocols}

We evaluate search-based question answering with Qwen2.5-3B-Instruct against chain and tree baselines (\Cref{tab:validation_results_chain,tab:validation_results_tree}), and with Qwen3-4B and Qwen3-8B (\Cref{tab:main_results}) \citep{Qwen3techreport}. Multi-hop benchmarks comprise HotpotQA, 2WikiMultihopQA, MuSiQue, and Bamboogle \citep{yang2018hotpotqa,2wikimultihop,trivedi2022musique,bamboogle}. Single-hop benchmarks comprise Natural Questions (NQ), TriviaQA, and PopQA \citep{kwiatkowski2019nq,joshi2017triviaqa,popqa}. The agent interleaves reasoning with queries to a Wikipedia index, receiving the top three passages per query. Qwen3 allows six tool calls and 6192 response tokens. \Cref{tab:qwen25_3b_hparams} gives the Qwen2.5 settings. Exact match (EM) is the percentage of questions whose normalized extracted answer matches a reference answer. Aggregate scores weight datasets by test-set size.

For Qwen3, we train on HotpotQA for 250 steps for multi-hop evaluation and NQ for 200 steps for single-hop evaluation. The tree configuration uses $M=10$ initial rollouts, $K=6$ exploration parents, and $L=2$ expansion rounds. We train \ATPO from its public implementation and apply TreeGRPO's likelihood-weighted backup \citep{ding2026treegrpo} to search trajectories. These methods and \TreeHCA share an expansion budget (\Cref{tab:main_results}). Published ReAct \citep{yao2023react}, GRPO, GSPO, and AEPO results complete the comparison. \Cref{app:experimental_details} provides further setup details.

\begin{table}[t]
\caption{Qwen2.5-3B-Instruct comparison for chain-based baselines: EM (\%). Darker to lighter green marks the three largest distinct values per column, with ties sharing a shade and maxima in bold. Underlined entries indicate methods (re-)evaluated in our setup. Each of these methods uses five trajectories.}
\label{tab:validation_results_chain}
\centering
\footnotesize
\setlength\tabcolsep{4.2pt}
\renewcommand\arraystretch{1.02}
\resizebox{\linewidth}{!}{%
\begin{tabular}{llrrrrrrrr}
\toprule
\multirow{2}{*}{\textbf{Method}} & \multirow{2}{*}{\textbf{Type}} & \multicolumn{4}{c}{\textbf{Multi-hop QA}} & \multicolumn{3}{c}{\textbf{Single-hop QA}} & \multirow{2}{*}{\textbf{Avg.}} \\
\cmidrule(r){3-6}\cmidrule(r){7-9}
& & \textbf{Hotpot} & \textbf{2Wiki} & \textbf{MuSiQue} & \textbf{Bamb.} & \textbf{NQ} & \textbf{TriviaQA} & \textbf{PopQA} & \\
\midrule
\rowcolor{tablegroup}\multicolumn{10}{c}{\textbf{Backbone: Qwen2.5-3B}} \\
Search-R1 & Chain-based & 32.4 & 31.9 & 10.3 & 26.4 & 34.1 & 54.5 & 37.8 & 37.4 \\
ZeroSearch & Chain-based & 27.4 & 30.0 & 9.8 & 11.1 & 41.4 & 57.4 & 44.8 & 39.1 \\
\underline{GiGPO} & Chain-based & 36.8 & 36.2 & \rktwo{12.2} & \rkthree{29.6} & 43.1 & \rkthree{60.5} & \rktwo{46.1} & 43.3 \\
\underline{IGPO} & Chain-based & \rkthree{37.5} & \rktwo{36.4} & \rkthree{11.8} & 25.6 & \rktwo{44.9} & \rkone{\textbf{61.8}} & \rkone{\textbf{46.5}} & \rkthree{43.6} \\
HCAPO & Chain-based & \rktwo{38.6} & \rkthree{36.3} & \rkone{\textbf{14.8}} & \rkone{\textbf{64.5}} & \rkthree{44.4} & \rkthree{60.5} & \rkthree{45.5} & \rktwo{44.3} \\
\textbf{\underline{\TreeHCA(n=5)}} & Tree-based & \rkone{\textbf{39.9}} & \rkone{\textbf{39.4}} & \rkone{\textbf{14.8}} & \rktwo{{32.4}} & \rkone{\textbf{45.3}} & \rktwo{61.5} & 44.4 & \rkone{\textbf{44.9}} \\
\bottomrule
\end{tabular}}
\end{table}

\begin{table}[t]
\caption{Qwen2.5-3B-Instruct comparison for tree-based baselines: EM (\%). Colors follow \Cref{tab:validation_results_chain}.}
\label{tab:validation_results_tree}
\centering
\footnotesize
\setlength\tabcolsep{4.2pt}
\renewcommand\arraystretch{1.02}
\resizebox{\linewidth}{!}{%
\begin{tabular}{llrrrrrrrr}
\toprule
\multirow{2}{*}{\textbf{Method}} & \multirow{2}{*}{\textbf{Type}} & \multicolumn{4}{c}{\textbf{Multi-hop QA}} & \multicolumn{3}{c}{\textbf{Single-hop QA}} & \multirow{2}{*}{\textbf{Avg.}} \\
\cmidrule(r){3-6}\cmidrule(r){7-9}
& & \textbf{Hotpot} & \textbf{2Wiki} & \textbf{MuSiQue} & \textbf{Bamb.} & \textbf{NQ} & \textbf{TriviaQA} & \textbf{PopQA} & \\
\midrule
\rowcolor{tablegroup}\multicolumn{10}{c}{\textbf{Backbone: Qwen2.5-3B}} \\
Tree-GRPO & Tree-based
& \rkthree{31.1} & \rkthree{31.7} & \rkthree{11.9} & \rkthree{28.0}
& \rkthree{44.5} & \rktwo{60.4} & \rkthree{43.7} & \rkthree{40.8} \\
\underline{IGRPO} & Tree-based
& \rktwo{38.0} & \rktwo{39.0} & \rktwo{14.3} & \rktwo{29.6}
& \rktwo{45.1} & \rkthree{60.0} & \rkone{\textbf{45.2}} & \rktwo{43.9} \\
\textbf{\underline{\TreeHCA(n=5)}} & Tree-based
& \rkone{\textbf{39.9}} & \rkone{\textbf{39.4}} & \rkone{\textbf{14.8}} & \rkone{\textbf{32.4}}
& \rkone{\textbf{45.3}} & \rkone{\textbf{61.5}} & \rktwo{44.4} & \rkone{\textbf{44.9}} \\
\bottomrule
\end{tabular}}
\end{table}

\subsection{Hindsight-Guided Exploration}
\label{sec:exploration_results}

\Cref{tab:validation_results_chain} compares chain-based methods with \TreeHCA. Search-R1 and ZeroSearch learn from completed reasoning-and-search trajectories \citep{jin2025search,Sun2025ZeroSearch}. GiGPO constructs turn-level comparisons by grouping comparable states, IGPO scores changes in reference-answer likelihood, and HCAPO weights actions using hindsight context \citep{feng2025groupingrouppolicyoptimizationllm,wang2026informationgainbasedpolicyoptimization,tan2026hindsightcreditassignmentlonghorizon}. \Cref{tab:validation_results_tree} adds tree-based baselines. Tree-GRPO uses shared-prefix comparisons \citep{ji2025treesearch}, with its reported scores taken from the reproduction in \citet{IGRPO}. \TreeHCA connects such feedback to the allocation of new continuations, explicitly revisiting histories before low-scoring decisions. It achieves the highest aggregate EM in this comparison, 44.9, exceeding HCAPO by 0.6 percentage points.

\begin{table}[t]
\caption{Qwen3 comparison: EM (\%). ReAct, GRPO, GSPO, and AEPO results are from \citet{zong2026at2poagenticturnbasedpolicy}. TreeGRPO, \ATPO, and \TreeHCA use our shared training protocol. W. Avg. weights datasets by test-set size. Colors rank values within each backbone. 
Underlined entries indicate methods (re-)evaluated in our setup.}
\label{tab:main_results}
\centering
\footnotesize
\setlength\tabcolsep{4.2pt}
\renewcommand\arraystretch{1.02}
\resizebox{\linewidth}{!}{%
\begin{tabular}{lccccccccc}
\toprule
\multirow{2}{*}{\textbf{Method}} & \multicolumn{5}{c}{\textbf{Multi-hop QA}} & \multicolumn{4}{c}{\textbf{Single-hop QA}} \\
\cmidrule(r){2-6}\cmidrule(r){7-10}
& \textbf{Hotpot} & \textbf{2Wiki} & \textbf{MuSiQue} & \textbf{Bamb.} & \textbf{W. Avg.} & \textbf{NQ} & \textbf{TriviaQA} & \textbf{PopQA} & \textbf{W. Avg.} \\
\midrule
\rowcolor{tablegroup}\multicolumn{10}{c}{\textbf{Backbone: Qwen3-4B}} \\
ReAct & 30.42 & 32.92 & 12.83 & 44.80 & 30.01 & 26.75 & 53.53 & 35.34 & 41.31 \\
GRPO & 44.76 & 51.40 & 21.60 & \rkthree{50.40} & 46.02 & 45.98 & \rktwo{65.17} & 49.18 & 54.97 \\
GSPO & \rkthree{47.07} & 49.25 & 22.68 & \rkthree{50.40} & 45.69 & 46.01 & 64.24 & 48.50 & 54.28 \\
\underline{TreeGRPO} & 46.40 & 51.10 & 20.27 & \rktwo{53.60} & 46.26 & \rkthree{46.31} & 64.91 & 49.75 & 55.19 \\
AEPO & 46.36 & \rkthree{51.78} & \rkthree{23.47} & \rkthree{50.40} & \rkthree{46.95} & 45.71 & 64.66 & \rktwo{50.13} & \rkthree{55.20} \\
\underline{\ATPO} & \rktwo{47.44} & \rktwo{52.24} & \rktwo{23.71} & \rkone{\textbf{55.20}} & \rktwo{47.62} & \rktwo{48.34} & \rkthree{64.94} & \rkthree{49.92} & \rktwo{55.55} \\
\textbf{\underline{\TreeHCA}} & \rkone{\textbf{49.23}} & \rkone{\textbf{52.73}} & \rkone{\textbf{24.15}} & \rktwo{53.60} & \rkone{\textbf{48.52}} & \rkone{\textbf{48.78}} & \rkone{\textbf{66.61}} & \rkone{\textbf{50.19}} & \rkone{\textbf{56.38}} \\
\midrule
\rowcolor{tablegroup}\multicolumn{10}{c}{\textbf{Backbone: Qwen3-8B}} \\
ReAct & 20.66 & 19.05 & 9.56 & 37.60 & 18.66 & 21.16 & 41.81 & 27.37 & 32.19 \\
GRPO & 47.01 & \rktwo{53.69} & 21.35 & \rkthree{54.40} & 48.03 & 45.70 & 67.42 & 50.17 & 56.29 \\
GSPO & \rktwo{49.59} & 52.55 & \rktwo{24.35} & \rkthree{54.40} & \rkthree{48.56} & 45.56 & \rkthree{67.75} & 49.66 & 56.15 \\
\underline{{TreeGRPO}} & 47.44 & 52.55 & 21.35 & 50.40 & 47.51 & 47.36 & 67.49 & 50.48 & 56.68 \\
AEPO & \rkthree{49.17} & \rkthree{52.97} & \rkthree{24.01} & \rkthree{54.40} & \rktwo{48.62} & \rktwo{49.92} & \rktwo{68.31} & \rktwo{51.77} & \rktwo{57.94} \\
\underline{\ATPO} & 48.82 & 52.96 & 23.13 & \rktwo{55.20} & 48.41 & \rkthree{49.53} & 66.54 & \rkthree{50.81} & \rkthree{56.75} \\
\textbf{\underline{\TreeHCA}} & \rkone{\textbf{50.51}} & \rkone{\textbf{54.99}} & \rkone{\textbf{24.50}} & \rkone{\textbf{56.00}} & \rkone{\textbf{50.25}} & \rkone{\textbf{51.22}} & \rkone{\textbf{68.49}} & \rkone{\textbf{52.93}} & \rkone{\textbf{58.75}} \\
\bottomrule
\end{tabular}}
\end{table}

\Cref{tab:main_results} examines tree construction and credit on the Qwen3 backbones. Relative to \ATPO, \TreeHCA increases the multi-hop weighted EM from 47.62 to 48.52 at 4B and from 48.41 to 50.25 at 8B. The corresponding single-hop increases are 55.55 to 56.38 and 56.75 to 58.75. Relative to TreeGRPO, the gains are 2.26 and 2.74 percentage points on multi-hop QA, and 1.19 and 2.07 points on single-hop QA. Across all seven methods, \TreeHCA achieves the highest weighted EM for both model sizes and task families. At 8B, it exceeds the strongest published comparison, AEPO, by 1.63 points on multi-hop QA and 0.81 points on single-hop QA.

\subsection{Credit Assignment and the Paired Construction}
\label{sec:component_results}

\begin{table}[t]
\caption{Component comparison on HotpotQA with Qwen3-4B (seed 42). Each row names the exploration rule and the credit rule. EM and F1 validation scores are percentages. Colors follow \Cref{tab:validation_results_chain}.}
\label{tab:ablation}
\centering
\small
\setlength\tabcolsep{7pt}
\begin{tabular}{@{}llrr@{}}
\toprule
\textbf{Exploration} & \textbf{Credit} & \textbf{EM} & \textbf{F1} \\
\midrule
\textbf{Hindsight} & \textbf{\TreeHCA} & \rkone{\textbf{51.76}} & \rkone{\textbf{65.88}} \\
Entropy & \TreeHCA & \rktwo{51.17} & \rktwo{64.17} \\
Entropy & \ATPO & \rkthree{50.78} & \rkthree{63.85} \\
Hindsight & \ATPO & 49.60 & 62.74 \\
Random & \TreeHCA & 49.02 & 61.60 \\
Policy chains & GSPO & 47.46 & 60.76 \\
\bottomrule
\end{tabular}
\end{table}

We vary exploration and credit one at a time (\Cref{tab:ablation}). With the \TreeHCA credit rule, hindsight exploration attains 51.76 EM, compared with 51.17 for entropy and 49.02 for random exploration.

Under hindsight exploration, \TreeHCA credit reaches 51.76 EM and 65.88 F1, versus 49.60 and 62.74 with \ATPO credit. Under entropy exploration, the EM scores are 51.17 and 50.78, respectively. Switching to \TreeHCA credit therefore gains 2.16 EM and 3.14 F1 points under hindsight exploration, and 0.39 EM and 0.32 F1 points under entropy exploration. Under \ATPO credit, however, entropy outperforms hindsight (50.78 versus 49.60 EM). The complete pair is strongest, and both metrics show that the effect of changing exploration depends on the credit rule.

\subsection{Variability and Training Cost}
\label{sec:additional_results}

The accuracy comparisons raise two further questions: \emph{How much does performance vary across training runs?} \emph{What training cost accompanies the gains?} We examine the variation across three seeds and the wall-clock time required by the paired construction.

\paragraph{Variation across training runs.}
\label{app:error_bars}
\Cref{tab:error_bars} reports the multi-hop results across three training seeds. The weighted EM is $48.52\pm0.75$ for Qwen3-4B and $50.25\pm1.27$ for Qwen3-8B. Across individual benchmarks, standard deviations range from 0 to 2.30 points at 4B and from 0.36 to 4.53 points at 8B. The aggregate gains over the strongest baseline are 0.90 and 1.63 points, respectively. Both gains exceed one standard deviation of \TreeHCA's observed run variation, supporting reasonably consistent aggregate performance across these seeds. The larger deviations on some individual benchmarks indicate greater sensitivity to the training seed.

\begin{table}[ht]
\caption{\TreeHCA multi-hop performance as mean $\pm$ one standard deviation over three training seeds. EM and its standard deviation are in percentage points. Means match \Cref{tab:main_results}. W. Avg. weights datasets by size.}
\label{tab:error_bars}
\centering
\footnotesize
\setlength\tabcolsep{5pt}
\renewcommand\arraystretch{1.1}
\begin{tabular}{lccccc}
\toprule
\textbf{Backbone} & \textbf{Hotpot} & \textbf{2Wiki} & \textbf{MuSiQue} & \textbf{Bamb.} & \textbf{W. Avg.} \\
\midrule
Qwen3-4B & $49.23\pm2.30$ & $52.73\pm0.34$ & $24.15\pm1.68$ & $53.60\pm0.00$ & $48.52\pm0.75$ \\
Qwen3-8B & $50.51\pm1.03$ & $54.99\pm1.55$ & $24.50\pm0.36$ & $56.00\pm4.53$ & $50.25\pm1.27$ \\
\bottomrule
\end{tabular}
\end{table}

\paragraph{Training cost.}
The wall-clock training times on eight H200 GPUs are 34.40 versus 31.50 hours at 4B and 36.20 versus 32.58 hours at 8B for \TreeHCA and \ATPO, respectively. These correspond to increases of 9.2\% and 11.1\%. Under the shared expansion budget, the paired design therefore improves accuracy with 9--11\% more training time. The timing comparison quantifies the computational cost accompanying the stronger credit signals.

\section{Conclusion}
\label{sec:conclusion}

\TreeHCA connects exploration and intermediate credit through a common hindsight change of measure. Hindsight scores guide the selection of alternative continuations, and inverse weights account for that sampling preference when estimating advantages. We derive this relationship for a hindsight-tilted proposal and finite-pool candidate selection, then realize it through a normalized tree backup. Across the evaluated question-answering settings, the paired construction improves on the compared baselines, and component ablations connect its performance to the combination of exploration and credit. Together, the sampling identities and component experiments identify the pairing as the central design principle behind these gains.

\clearpage
\section*{Reproducibility Statement}
\Cref{sec:hindsight_guided_branching,sec:credit_assignment} specify the exploration rule and learning objective. Proofs and estimator conditions appear in \cref{app:theory}, and configurations, result provenance, and uncertainty reporting in \cref{app:experimental_details,app:error_bars}. Code: \url{https://github.com/TheBlackCat22/TreeHCA}.

\bibliographystyle{iclr2027_conference}
\bibliography{reference}

\newpage

\appendix
\UseAppendixReferenceNames
\raggedbottom
\makeatletter
\setlength{\@fptop}{0pt}
\setlength{\@fpsep}{12pt}
\makeatother
\section{Detailed related work}
\label{app:extended_related_work}

Intermediate credit assignment depends on both the information available at a decision and the continuations used to assess it. We organize the relevant methods by how they collect alternatives and translate outcomes into learning signals.

\subsection{Hindsight distributions and practical action credit}

Hindsight credit assignment (HCA) measures an action's relevance through its probability conditioned on a later state or return \citep{harutyunyan2019hindsight}. Bayes' rule relates this posterior-to-policy ratio to the action's effect on future outcomes. Under exact conditionals and appropriate support, the forward ratio recovers action-conditioned values from policy-mixture outcomes, while the inverse ratio recovers policy-mixture expectations from action-conditioned outcomes. For binary rewards, the success-conditioned ratio ranks actions by expected reward at a common parent. These identities connect outcome relevance to a change of measure.

GiGPO obtains practical credit by combining episode-relative returns with discounted-return comparisons among actions at recurring anchor states \citep{feng2025groupingrouppolicyoptimizationllm}. It matches environment-state keys, using sequence similarity for search-QA histories. These comparisons can link actions at different times or histories sharing an anchor. HCAPO uses the actor as a generative hindsight critic \citep{tan2026hindsightcreditassignmentlonghorizon}. It scores recorded actions with successful-outcome context, applies length and temperature normalization, divides each score by its trajectory mean, and clips the multiplier before combining discounted-return and trajectory-relative feedback. Both obtain intermediate signals from collected trajectories. \TreeHCA uses the exact hindsight conditional for its ranking interpretation and reference-answer scores to direct the collection of alternatives.

\subsection{Outcome and process feedback along policy trajectories}

Outcome-supervised GRPO assigns each response's standardized reward to its generated tokens, comparing responses to the same prompt \citep{GRPO}. Its process-supervised formulation instead accumulates intermediate reward-model scores. Search-R1 learns interleaved reasoning and retrieval from terminal answer rewards, using PPO with a learned value model by default and GRPO as an alternative \citep{jin2025search}. Retrieved passages condition subsequent decisions, while losses apply to generated tokens. ZeroSearch trains with a retrieval simulator and a curriculum of increasingly noisy observations, combining terminal answer feedback with experience under changing retrieval conditions \citep{Sun2025ZeroSearch}.

Process rewards supply information about intermediate search decisions. StepSearch rewards gains in gold-evidence coverage, penalizes repeated retrieval, and adds answer and query supervision \citep{chen2025stepsearch}. PPO propagates these rewards through value estimates and generalized advantage estimation. IGPO instead uses changes in the reference answer's length-normalized log likelihood as dense turn rewards, normalizing process and outcome feedback separately \citep{wang2026informationgainbasedpolicyoptimization}. These methods refine credit within collected chains. In \TreeHCA, answer information also determines which histories and decisions receive additional rollout budget.

\subsection{Tree structure, exploration criteria, and credit aggregation}

Shared-prefix trees compare continuations from a common history while reusing earlier interaction. The LLM-agent Tree-GRPO of \citet{ji2025treesearch} expands randomly selected nonleaf steps and combines reward-normalized advantages within and across trees. Shared-prefix occurrences accumulate contributions from descendant trajectories. The diffusion-model TreeGRPO of \citet{ding2026treegrpo} samples alternative denoising paths within a stochastic window and propagates normalized terminal advantages through child weights proportional to behavior-policy likelihoods. These distinct methods connect tree structure to credit in agent interactions and latent denoising, respectively.

Entropy provides another exploration criterion. \ATPO ranks histories using sampled-token log probabilities and a penalty for repeated parent expansion, then aggregates leaf signals through entropy-weighted backups and applies a turn-level policy objective \citep{zong2026at2poagenticturnbasedpolicy}. AEPO uses a pilot trajectory to allocate sampling between the root and intermediate histories according to root and tool-related entropy \citep{AEPO}. It spreads exploration through a consecutive-expansion penalty and combines entropy-shaped advantages with a clipping modification that preserves positive-advantage gradients beyond the upper threshold. \TreeHCA connects history and candidate selection to reference-answer compatibility and the corresponding sampling correction.

\subsection{The sampling distribution and policy optimization}

IGRPO directly connects answer information to rollout allocation \citep{IGRPO}. It combines length-normalized answer likelihood with improvement over the parent and exponentially transforms this score into expansion probabilities. Under a fixed policy, equal horizon, and growing per-layer budgets, its analysis gives an exponentially tilted trajectory distribution with a KL-regularized teacher interpretation. Group-relative training uses samples from this teacher. Our estimation target is the original rollout-policy expectation at a selected parent, connecting continuation selection to the reciprocal weighting that recovers that target.

For an independently generated candidate pool with positive selection probabilities, \Cref{eq:finite_pool_correction} gives an exact correction: conditional weighting recovers the pool mean, and averaging over candidate generation recovers the base-policy expectation. The practical tree backup in \Cref{eq:hindsight_corrected_tree_backup} normalizes inverse proxy scores over retained siblings. Under independent draws from the specified direct tilt, the corresponding self-normalized estimator is consistent (\Cref{app:snis}). The sampling law therefore supplies the inverse-score form used in credit aggregation.

Policy optimization addresses a separate distribution change between rollout and updated policies. GSPO uses a length-normalized sequence ratio, while GSPO-token allows token-specific advantages through a stop-gradient construction \citep{GSPO}. In \TreeHCA, this optimizer acts on turn-level advantages after tree credit has been computed. The exploration rule and credit operator share a common principle: credit determines which alternatives are collected, and their collection law informs the credit assigned to their outcomes.

\section{Additional experimental details}
\label{app:experimental_details}

\subsection{Hyperparameters}
\Cref{tab:shared_hparams} groups the Qwen3 settings shared by \ATPO and \TreeHCA according to their role in training. The optimizer and learning rate govern parameter updates, while clipping controls the likelihood-ratio range in the policy objective. The training and PPO mini-batch sizes specify the granularity of these updates. Prompt and response lengths are measured in tokens, and the tool-call limit bounds the number of external interactions along a trajectory.

The rollout settings describe how trajectories are collected. The tree starts with $M$ complete trajectories and adds alternatives from up to $K$ selected parents in each of $L$ expansion rounds. These controls specify the allocation of exploration across histories and rounds, complementing the length and interaction limits on each trajectory.

\begin{table}[!htb]
\centering
\caption{Qwen3 training and rollout settings shared by \ATPO and \TreeHCA.}
\label{tab:shared_hparams}
\begin{tabular*}{\linewidth}{@{\extracolsep{\fill}}lc@{}}
\toprule
\textbf{Hyperparameter} & \textbf{Value} \\
\midrule
\multicolumn{2}{@{}l}{\textit{Policy optimization}} \\
Optimizer & AdamW \\
Learning rate & 1e-6 \\
Clip\_ratio & 0.2 \\
Training batch size & 64 \\
PPO mini-batch size & 8 \\
\midrule
\multicolumn{2}{@{}l}{\textit{Sequence and interaction limits}} \\
Maximum prompt length & 2000 \\
Maximum response length & 6192 \\
Maximum tool calls & 6 \\
\midrule
\multicolumn{2}{@{}l}{\textit{Rollout allocation}} \\
Rollout\_n & 16 \\
Initial rollout count $M$ & 10 \\
Selected parents $K$ & 6 \\
Expansion rounds $L$ & 2 \\
\bottomrule
\end{tabular*}
\end{table}

\begin{table}[!htb]
\centering
\caption{Rollout settings for Qwen2.5-3B experiments.}
\label{tab:qwen25_3b_hparams}
\begin{tabular*}{\linewidth}{@{\extracolsep{\fill}}lc@{}}
\toprule
\textbf{Hyperparameter} & \textbf{Value} \\
\midrule
\multicolumn{2}{@{}l}{\textit{Training}} \\
Number of training steps & 200 \\
Exploration threshold $\kappa_{\mathrm{br}}$ & $-0.14$ \\
\midrule
\multicolumn{2}{@{}l}{\textit{Sequence and interaction limits}} \\
Maximum turns & 4 \\
Maximum rollout length & 512 \\
\midrule
\multicolumn{2}{@{}l}{\textit{Rollout allocation}} \\
GRPO rollout count $n$ & 5 \\
\TreeHCA candidate children $N$ & 2 \\
\bottomrule
\end{tabular*}
\end{table}

\Cref{tab:treehca_hparams} gives the additional controls for hindsight-guided exploration. Score smoothing stabilizes the comparison of reference-answer likelihoods when these likelihoods are small. The negative exploration threshold identifies completed turns that reduce the score, and the sibling penalty lowers the priority of parents that already have alternatives. The candidate count $N$ determines how many completed turns are scored at a selected parent before choosing a continuation.

\begin{table}[!htb]
\centering
\caption{\TreeHCA score and exploration controls.}
\label{tab:treehca_hparams}
\begin{tabular*}{\linewidth}{@{\extracolsep{\fill}}lc@{}}
\toprule
\textbf{Hyperparameter} & \textbf{Value} \\
\midrule
Reference-answer score smoothing $\epsilon_{\mathrm{smooth}}$ & 0.05 \\
Exploration threshold $\kappa_{\mathrm{br}}$ & $-0.14$ \\
Sibling penalty $\gamma$ & 0.05 \\
Candidates per selected parent $N$ & 2 \\
\bottomrule
\end{tabular*}
\end{table}

\subsection{Runtime}
\label{app:runtime}
All training experiments for QA tasks were conducted on a high-performance computing cluster equipped with 8$\times$ NVIDIA H200 GPUs. \Cref{tab:runtime} reports wall-clock training time for Qwen3-4B and Qwen3-8B under \ATPO and \TreeHCA.

\begin{table}[!htb]
\centering
\caption{Wall-clock training time in hours on 8 H200 GPUs.}
\label{tab:runtime}
\begin{tabular*}{\linewidth}{@{\extracolsep{\fill}}lcc@{}}
\toprule
\textbf{Model} & \textbf{\ATPO} & \textbf{\TreeHCA} \\
\midrule
Qwen3-4B & 31.50 & 34.40 \\
Qwen3-8B & 32.58 & 36.20 \\
\bottomrule
\end{tabular*}
\end{table}

\subsection{Code details}
The experiments with the Qwen3 family models use code built upon the codebase for \ATPO \citep{zong2026at2poagenticturnbasedpolicy}, which itself is built on the VeRL framework \citep{Sheng_2025,zhang2024framework}. The experiments with the Qwen2.5-3B-Instruct model use the codebase for IGRPO \citep{IGRPO} with the addition of our \TreeHCA implementation. That codebase uses the VeRL-agent framework \citep{feng2025groupingrouppolicyoptimizationllm}.

\section{Additional component measurements}
\label{app:ablation}

\Cref{tab:ablation} reports EM and F1 for the component comparison. The corresponding validation scores recorded for the six rows are 51.56, 50.00, 49.41, 48.05, 47.66, and 39.26, respectively. These values preserve the distinction between the validation measurement and the EM and F1 metrics used in the main comparison. All six rows use Qwen3-4B on HotpotQA with seed 42.

\begin{figure}[ht]
\centering
\includegraphics[width=0.82\linewidth]{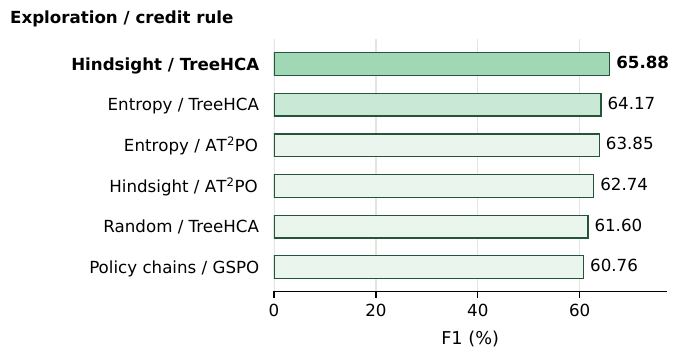}
\caption{F1 for the component comparison in \Cref{tab:ablation}. The figure reports the F1 column, including 65.88 for the complete hindsight-guided exploration and credit pair.}
\label{fig:ablation_em}
\end{figure}
\FloatBarrier

\section{Hindsight Identities and Estimator Conditions}
\label{app:theory}

\subsection{The Return-Conditioned HCA Identity}
\label{app:hca_ranking_proof}

\Cref{lem:hca_counterfactual_actions} instantiates the return-conditioned action-value identity of \citet{harutyunyan2019hindsight}, Theorem~5. The states of our process are complete interaction histories. A composite turn $u=(s,a)$ is an action, the environment supplies its observation, and the undiscounted return equals the terminal verifier reward. This formulation retains the policy's conditioning information and treats the tool response as part of the state transition.

\begin{proof}[Proof of \Cref{lem:hca_counterfactual_actions}]
Fix $h$ and a decision $u$ with $\pi(u\mid h)>0$. For every outcome $r$ with positive probability under a continuation from $h$, Bayes' rule gives
\begin{equation}
\PP_\pi(R=r\mid h)\frac{\PP_\pi(U=u\mid h,R=r)}{\pi(u\mid h)}
=\PP_\pi(R=r\mid h,U=u).
\label{eq:appendix_hca_bayes}
\end{equation}
Multiplying by $r$ and summing over the possible returns proves
\begin{equation}
\EE_{\tau\sim T(h,\pi)}[R(\tau)\rho_{\HCA}(h,u;R(\tau))]
=\sum_r r\PP_\pi(R=r\mid h,U=u)=Q^\pi(h,u).
\end{equation}
Zero-probability return events contribute zero and need no conditional ratio. For a binary terminal reward, only $r=1$ contributes. When $V^\pi(h)>0$, the identity reduces to $Q^\pi(h,u)=V^\pi(h)\rho_{\HCA}(h,u;1)$, proving the shared-parent ranking statement.
\end{proof}

For completed children with $p_\pi(c\mid n)>0$ and $V^\pi(h(n))>0$, apply Bayes' rule to $C=(U,O)$ under the joint kernel $p_\pi(c\mid n)$. The corresponding ratio is
\begin{equation}
\rho_c=\frac{\PP_\pi(C=c\mid h(n),R=1)}{p_\pi(c\mid n)}
=\frac{\PP_\pi(R=1\mid h(n),C=c)}{\PP_\pi(R=1\mid h(n))}.
\end{equation}
Its numerator conditions on the realized observation. The action-conditioned value in the lemma averages over that observation. The practical score in \Cref{eq:practical_log_hindsight_ratio} is evaluated on the completed history, and therefore supplies a proxy for this completed-child quantity.

\subsection{Exact Importance Identities}
\label{app:importance_proof}

\begin{proof}[Proof of \Cref{prop:hindsight_correction}]
For the direct tilt, substitution of $q_s(c)=p(c)s(c)/Z$ yields
\begin{equation}
\int q_s(c)\frac{Z}{s(c)}f(c)\,dc
=\int p(c)f(c)\,dc.
\end{equation}
The same calculation applies to discrete children by replacing integrals with sums. Support coverage ensures that all nonzero target contributions are represented.

For finite-pool selection, condition on the entire candidate pool. Then
\begin{equation}
\sum_{i=1}^N\alpha_i\frac{f(C_i)}{N\alpha_i}
=\frac{1}{N}\sum_{i=1}^N f(C_i).
\end{equation}
Averaging over the iid candidates gives $\EE_p[f(C)]$. The resulting inverse-selection weight is
\begin{equation}
\frac{1}{N\alpha_i}=\frac{\frac{1}{N}\sum_j s(C_j)}{s(C_i)}.
\label{eq:pool_inverse_score}
\end{equation}
For random continuation estimates $\widehat f_i$, the same proof holds if
$\EE[\widehat f_i\mid C_{1:N},I=i]=f(C_i)$. A fresh policy continuation supplies this conditional estimate of the selected child's value.
\end{proof}

The finite-pool identity separates two sources of randomness. Candidate generation gives a sample from the base kernel, and score-based selection changes the conditional distribution over that sample. Its exact correction uses the mean score of all proposed candidates, including candidates whose continuations are not completed.

For the exact success-conditioned ratio, a child with zero success probability has zero proposal mass. The reward-value integral can still satisfy coverage when its integrand vanishes on those children. Recovering a general centered signal instead requires coverage for that signal's support. Positive proxy scores supply full conditional coverage within a finite candidate pool.

\paragraph{Two-child illustration.}
Suppose the rollout policy assigns probability $1/2$ to each of $c_1,c_2$, and their positive scores are $s(c_1)=1$ and $s(c_2)=3$. The tilted proposal samples them with probabilities $1/4$ and $3/4$. Its normalizer is $Z=2$, so the inverse weights are $2$ and $2/3$. For arbitrary child signals $f_1,f_2$, their weighted expectation is
\begin{equation}
\tfrac14(2f_1)+\tfrac34(\tfrac23 f_2)=\tfrac12 f_1+\tfrac12 f_2.
\label{eq:two_child_example}
\end{equation}
The higher-scoring child receives more sampling effort, while each of its observations receives less weight in the base-policy estimate. The same calculation holds conditionally for a candidate pool containing these two children. The score controls the allocation of samples, and the correction preserves the expectation they estimate.

\subsection{The Marginal Law of Finite-Pool Selection}
\label{app:finite_pool}

Let $C_1,\ldots,C_N$ be iid from $p$ and select an index in proportion to a positive score $s$. Exchangeability gives the marginal density of the selected child as
\begin{equation}
q_N(c)=Np(c)\,\EE\!\left[
\frac{s(c)}{s(c)+\sum_{j=2}^N s(C_j)}\right].
\label{eq:finite_pool_marginal}
\end{equation}
To obtain this expression, sum the probability of selecting each of the $N$ positions, condition on the value at that position, and integrate the remaining $N-1$ candidates. In particular, $q_1=p$. For scores bounded above and bounded away from zero, the law of large numbers and dominated convergence give $q_N(c)/p(c)\to s(c)/\EE_p[s(C)]$ wherever $p(c)>0$. This identifies the normalized score tilt as the large-pool limit of the selection rule.

The finite-pool correction in \Cref{eq:pool_inverse_score} is exact at every pool size. The normalized sibling backup in \Cref{eq:hindsight_corrected_tree_backup} instead uses the completed children retained in the tree. That set contains original policy children and alternatives added at adaptively selected parents. Its scores and downstream signals are coupled to the tree-construction history. The direct-tilt model supplies the inverse-score form of the practical operator, and the empirical component comparison measures that operator on the resulting finite trees.

\subsection{Self-Normalized Estimation and Tree Backups}
\label{app:snis}

Suppose $D_1,\ldots,D_m$ are iid draws from the direct tilt $q_s$, with $s>0$ on the base support, $0<Z<\infty$, and $\EE_p|f(C)|<\infty$. The self-normalized estimator is
\begin{equation}
\widehat\mu_m=
\frac{\sum_{j=1}^m f(D_j)/s(D_j)}{\sum_{j=1}^m1/s(D_j)}.
\end{equation}
By the strong law of large numbers, the numerator divided by $m$ converges almost surely to $\EE_p[f]/Z$, and the denominator divided by $m$ converges to $1/Z$. Hence $\widehat\mu_m\to\EE_p[f]$ almost surely. This is a consistency statement for the specified sampling model. At a finite sample size, self-normalization forms a ratio of random sums and generally introduces bias.

The practical tree backup has useful algebraic properties independently of that sampling model. Positive scores produce nonnegative weights summing to one, so the parent signal lies between the smallest and largest child signals. For any constants $a$ and $b$,
\begin{equation}
\sum_c\widetilde w_{n,c}\bigl(a f(c)+b\bigr)
=a\sum_c\widetilde w_{n,c}f(c)+b.
\end{equation}
Consequently, applying the same group-wide affine reward normalization before or after a fixed weighted backup gives the same result. A single-child node propagates its child's signal unchanged. When all leaf rewards agree, \Cref{eq:leaf_advantage} initializes and propagates a zero signal.

The change-of-measure results concern a fixed integrand and a specified selection law. The implemented learning rule combines proxy scores, a finite adaptive tree, group-relative leaf normalization, and a clipped GSPO objective. These components define the algorithm evaluated in \Cref{sec:experiments}. The propositions establish the sampling-estimation relationships that motivate the algorithm.

\section{Tree-construction implementation details}
\label{app:pseudocode}

Two details clarify the construction in \Cref{alg:tree_construction}. First, the practical hindsight score in \Cref{eq:practical_log_hindsight_ratio} uses a \emph{completed} turn $n=(u,o)$. We execute every candidate against the environment before scoring it, then continue only the selected candidate to termination. One exploration step therefore costs $N$ turn executions and a single full rollout. Second, line~\ref{line:score} clips the score at $\kappa_{\mathrm{br}}$, which assigns exactly the value $\kappa_{\mathrm{br}}$ to every node that is not flagged. Taking the $K$ smallest scores with uniform tie-breaking therefore expands the flagged nodes in increasing order of score and fills any remaining slots uniformly at random, which is the selection rule described in \Cref{sec:hindsight_guided_branching}.

For the Qwen2.5-3B experiments, we use a rollout-matched variant of TreeHCA. Rather than first completing a fixed set of initial trajectories and branching afterward, we monitor the hindsight ratio during rollout generation. Whenever the hindsight ratio falls below a predefined threshold, we branch from that decision point and sample two alternative children. This keeps the total rollout budget comparable to GRPO while concentrating additional exploration on decisions identified as potentially consequential by the hindsight signal.

\begin{algorithm}[t]
\caption{Hindsight-guided chain construction}
\label{alg:chain_construction}
\begin{algorithmic}[1]
\REQUIRE Prompt $x$, reference answer $y^\star$, frozen policy $\pi$, candidate count $n$, maximum turns $T_{\max}$, branching threshold $\kappa_{\rm br}$.
\STATE Initialize the trajectory with prompt $x$ and context $\mathcal H_{<0}$.
\FOR{$t=0,\ldots,T_{\max}-1$}
    \STATE Sample a decision $u_t \sim \pi(\cdot\mid\mathcal H_{<t})$ and execute it.
    \STATE Compute its hindsight ratio $\widehat{\rho}_{\HCA}(u_t)$.
    \IF{$\widehat{\rho}_{\HCA}(u_t) < \kappa_{\rm br}$}
        \STATE Sample $n$ candidate decisions
        $\{u_t^{(i)}\}_{i=1}^{n}\sim\pi(\cdot\mid\mathcal H_{<t})$
        and execute each candidate.
        \STATE Compute $\widehat{\rho}_{\HCA}^{(i)}$ for each candidate.
        \STATE Form the hindsight-guided distribution
        \[
        p_i
        =
        \frac{
        \exp\!\left(\log \widehat{\rho}_{\HCA}^{(i)}\right)
        }{
        \sum_{j=1}^{n}
        \exp\!\left(\log \widehat{\rho}_{\HCA}^{(j)}\right)
        }.
        \]
        \STATE Sample $I\sim\mathrm{Categorical}(p_1,\ldots,p_n)$ and set
        $u_t \leftarrow u_t^{(I)}$.
    \ENDIF
    \STATE Append the selected decision $u_t$ and its observation to the trajectory.
    \STATE Update the context $\mathcal H_{<t+1}$.
    \IF{$u_t$ produces a final answer}
        \STATE \textbf{break}
    \ENDIF
\ENDFOR
\STATE \textbf{return} the completed trajectory with its terminal verifier reward.
\end{algorithmic}
\end{algorithm}

\section{Research assets}
\label{app:licenses}

The experiments use the Qwen2.5 and Qwen3 model families and the seven question-answering benchmarks cited in \Cref{sec:experiments}. Dataset annotations supply reference answers for training-time hindsight scoring and evaluation. Retrieved passages enter the agent's history as tool observations and are masked from the policy-token loss. Training uses code from the prior works \ATPO and IGRPO, which are built upon the VeRL and verl-agent frameworks, respectively (\cref{app:experimental_details}).

\section{Sample generations during training}
\label{app:case_study}

The sibling rollouts below come from one tree during Qwen3-8B training on HotpotQA. Their shared prefix retrieves passages mentioning several NBA Finals series (\Cref{shared_prefix_example}). The unsuccessful continuation immediately answers San Antonio Spurs and receives reward $R=0$ (\Cref{fail_example}). The successful continuation searches explicitly for the Miami Heat's 2012 opponent, retrieves Oklahoma City Thunder, and receives $R=1$ (\Cref{success_example}). These trajectories expose alternative decisions under the same initial evidence: answering immediately or seeking further information. \TreeHCA uses their completed outcomes to compute intermediate credit through the tree.

\begin{table}[!htbp]
    \centering
    \fontsize{9pt}{11pt}\selectfont
    \begin{tabular}{p{\dimexpr\linewidth-2\tabcolsep\relax}}
    \midrule
        \rowcolor{gray!20}\textbf{Shared Prefix (identical in both continuations)} \\
    \midrule
        \textbf{Question:} Juwan Howard beat what team to win his first NBA Final?\\
        \textbf{Labeled Answer:} Oklahoma City Thunder \\
    \midrule
        \rowcolor{blue!10}\texttt{<think>} Okay, I need to find out which team Juwan Howard beat to win his first NBA Finals. First, I should confirm who Juwan Howard is. He was an NBA coach, right? Wait, but the question says he beat a team to win his first NBA Final. Hmm, maybe I'm mixing up some details. Let me check. \texttt{</think>} \texttt{<search>} Juwan Howard NBA coach \texttt{</search>} \\
\rowcolor{green!10}Page 1: ``Juwan Howard'' Juwan Antonio Howard (born February 7, 1973) is an American former professional basketball player who is an assistant coach for the Miami Heat of the National Basketball Association (NBA). Howard formerly played for the Heat from 2010 until 2013. A one-time All-Star and one-time All-NBA power forward, he began his NBA career as the fifth overall pick in the 1994 NBA draft, selected by the Washington Bullets. \\
\rowcolor{green!10}Page 2--3: \textbf{*** Omitted for brevity ***} \\
        \rowcolor{blue!10}\texttt{<think>} Wait, the search result says Juwan Howard was a player, not a coach. But the original question says he beat a team to win his first NBA Final. Maybe there's confusion here. Perhaps I should search for ``Juwan Howard NBA Finals'' to see if he was a coach or player in the Finals. \texttt{</think>} \texttt{<search>} Juwan Howard NBA Finals \texttt{</search>} \\
\rowcolor{green!10}Page 1: ``Juwan Howard'' the finals of the NCAA Division I Men's Basketball Championship in 1992 and 1993. Howard won his first NBA championship with Miami in the 2012 NBA Finals and his second NBA championship in the 2013 NBA Finals. \\
\rowcolor{green!10}Page 2: ``Juwan Howard'' of the Fab Five to win an NBA championship, as a role player on the 2011--12 Miami Heat. Howard was on the court as time expired in the series-clinching game because the three stars of the Heat (James, Wade and Bosh) had arranged it. \\
\rowcolor{green!10}Page 3: \textbf{*** Omitted for brevity ***} \\
        \rowcolor{blue!10}\texttt{<think>} The first search result says that Howard won his first NBA championship with Miami in the 2012 NBA Finals, and the second says he was a role player on the 2011--12 Miami Heat. So he was on the Heat team that won the 2012 NBA Finals. But the question is who did he beat. Let me find that information. \texttt{</think>} \texttt{<search>} 2012 NBA Finals losers \texttt{</search>} \\
\rowcolor{green!10}Page 1: ``2012 NBA Finals'' had previously appeared as the SuperSonics in 1996, where they lost to the Michael Jordan-led Chicago Bulls. It also marked the Miami Heat's third appearance in the NBA Finals. The Heat previously appeared in 2006 and 2011, both times against the Dallas Mavericks. It was the first NBA Finals in 14 seasons that was not held in either the states of California or Texas; the 3 teams that won the previous 13 Western Conference titles the Dallas Mavericks, the Los Angeles Lakers and the San Antonio Spurs were eliminated by Oklahoma City in that order. \\
\rowcolor{green!10}Page 2: ``2012 NBA Finals'' straight championship, losing to the Spurs 4--1 in a rematch of the previous season's finals. The Spurs had defeated the Thunder in the Western Conference finals prior to the NBA finals. After the Heat lost the 2014 Finals, LeBron James returned to the Cleveland Cavaliers \ldots \\
\rowcolor{green!10}Page 3: \textbf{*** Omitted for brevity ***} \\
    \midrule
        \textbf{*** Shared parent: the continuations below begin here ***} \\
    \bottomrule
    \end{tabular}
    \caption{Shared prefix of the sibling rollouts on HotpotQA. Blue marks model output and green marks retrieved passages. Tool-response delimiters are omitted.}
    \label{shared_prefix_example}
\end{table}

\begin{table}[!htbp]
    \centering
    \fontsize{9pt}{11pt}\selectfont
    \begin{tabular}{p{\dimexpr\linewidth-2\tabcolsep\relax}}
    \midrule
        \rowcolor{gray!20}\textbf{Unsuccessful Continuation} \hfill Terminal reward $R=0$ \\
    \midrule
        \textbf{*** Shared prefix omitted; see \Cref{shared_prefix_example} ***} \\
        \rowcolor{blue!10}\texttt{<think>} Wait, the first search result says the 2012 NBA Finals were between Miami Heat and the San Antonio Spurs. So, the Heat beat the Spurs. Therefore, the answer is San Antonio Spurs. \texttt{</think>} \\
        \rowcolor{blue!10}Juwan Howard won his first NBA championship with the Miami Heat in the 2012 NBA Finals. The team he beat to achieve this victory was the San Antonio Spurs. \\
        \rowcolor{blue!10}\texttt{<answer>} The final answer is \boxed{\text{San Antonio Spurs}} \texttt{</answer>} \\
    \bottomrule
    \end{tabular}
    \caption{The unsuccessful continuation from the shared parent of \Cref{shared_prefix_example}. It issues no further query and commits to a wrong answer in a single turn.}
    \label{fail_example}
\end{table}

\begin{table}[!htbp]
    \centering
    \fontsize{9pt}{11pt}\selectfont
    \begin{tabular}{p{\dimexpr\linewidth-2\tabcolsep\relax}}
    \midrule
        \rowcolor{gray!20}\textbf{Successful Continuation} \hfill Terminal reward $R=1$ \\
    \midrule
        \textbf{*** Shared prefix omitted; see \Cref{shared_prefix_example} ***} \\
        \rowcolor{blue!10}\texttt{<think>} Hmm, the first search result says that in the 2012 NBA Finals, the Miami Heat were in the series, but it doesn't mention the opponent. The first paragraph says the Heat had previously appeared in 2006 and 2011 against the Mavericks. The Spurs were the team that beat the Heat in the 2014 Finals. So the answer isn't in the search result. Let me think again. Since Howard was on the Heat team that won the 2012 NBA Finals, but the search result is not clear, maybe I should search for ``2012 NBA Finals Miami Heat opponent'' directly. \texttt{</think>} \texttt{<search>} 2012 NBA Finals Miami Heat opponent \texttt{</search>} \\
\rowcolor{green!10}Page 1: ``2012 NBA Finals'' \textbf{*** Identical to Page 1 above; omitted for brevity ***} \\
\rowcolor{green!10}Page 2: ``2012 NBA Finals'' The 2012 NBA Finals was the championship series of the 2011--12 season of the National Basketball Association (NBA), and the conclusion of the season's playoffs. The Eastern Conference champion Miami Heat defeated the Western Conference champion Oklahoma City Thunder 4 games to 1 to win their second NBA title. Heat Small forward LeBron James was named the Finals MVP. \\
\rowcolor{green!10}Page 3: ``2011--12 Miami Heat season'' \ldots On June 21, the Miami Heat won the NBA Championship at home against the Oklahoma City Thunder, \\
        \rowcolor{blue!10}\texttt{<think>} Wait, according to this, the Miami Heat defeated the Oklahoma City Thunder in the 2012 NBA Finals. So, the answer would be the Oklahoma City Thunder. But let me check again. From the last search result, the 2012 NBA Finals were between the Miami Heat and the Oklahoma City Thunder. So, the team Juwan Howard beat was the Oklahoma City Thunder. \texttt{</think>} \\
        \rowcolor{blue!10}Juwan Howard was a player for the Miami Heat team that won the 2012 NBA Finals. The Miami Heat defeated the Oklahoma City Thunder in that series. \\
        \rowcolor{blue!10}\texttt{<answer>} The team Juwan Howard beat to win his first NBA Finals was the \boxed{\text{Oklahoma City Thunder}}. \texttt{</answer>} \\
    \bottomrule
    \end{tabular}
    \caption{The successful continuation from the same shared parent. It declines to answer from the ambiguous passages and reformulates the query, which retrieves an explicit statement of the required fact.}
    \label{success_example}
\end{table}

\FloatBarrier

\end{document}